%\pdfminorversion=7
\documentclass[letterpaper,10pt,conference]{ieeeconf}
\pdfoutput=1
\usepackage{cite}
\usepackage{graphicx}
\usepackage{subcaption}
\usepackage[cmex10]{amsmath}
\interdisplaylinepenalty=2500
\usepackage{algorithm}
\usepackage{algorithmic}
\usepackage{xcolor}
\usepackage{array}
\usepackage{url}
\usepackage{bbm}
% use todonotes package.
\usepackage{todonotes}

%--- Self-Defined Package -------------
\usepackage{epsfig}
\usepackage{amsfonts,amssymb,multirow,bigstrut,booktabs,ctable,latexsym}
\usepackage{mathtools}
\usepackage[mathscr]{eucal}
\usepackage{verbatim}

\usepackage{amsthm}
\usepackage{algorithm}
\usepackage[normalem]{ulem}
\newtheorem{definition}{Definition}

\newtheorem{example}{Example}
\newtheorem{proposition}{Proposition}
\newtheorem{remark}{Remark}

%\hyphenation{op-tical net-works semi-conduc-tor}
\IEEEoverridecommandlockouts
%\pdfminorversion=4

	\title{\bf Reinforcement Learning Beyond Expectation}%:\\ A Cumulative Prospect-Theoretic Framework}
	
	\author{Bhaskar Ramasubramanian$^{1}$, Luyao Niu$^{2}$, Andrew Clark$^{2}$, and Radha Poovendran$^{1}$% 
		\thanks{$^{1}$Network Security Lab, Department of Electrical and Computer Engineering, 
			University of Washington, Seattle, WA 98195, USA. \newline
			{\tt\small \{bhaskarr, rp3\}@uw.edu}}
\thanks{$^2$Department of Electrical and Computer Engineering, Worcester Polytechnic Institute, Worcester, MA 01609 USA.
			{\tt\{lniu,aclark\}@wpi.edu}}
%		\thanks{This work was supported by the U.S. Army Research Office, National Science Foundation, and the Office of Naval Research via Grants W911NF-16-1-0485, CNS-1941670, and N00014-17-S-B001 respectively. }
	}

\begin{document}	
	\maketitle
	
	%%%%%%%%%%%%%%%%%%%%%%%%%%%%%%%
	
\begin{abstract}
The inputs and preferences of human users are important considerations in situations where these users interact with autonomous cyber or cyber-physical systems. 
In these scenarios, one is often interested in aligning behaviors of the system with the preferences of one or more human users. 
Cumulative prospect theory (CPT) is a paradigm that has been empirically shown to model a tendency of humans to view gains and losses differently. 
In this paper, we consider a setting where an autonomous agent has to learn behaviors in an unknown environment. 
In traditional reinforcement learning, these behaviors are learned through repeated interactions with the environment by optimizing an expected utility. 
In order to endow the agent with the ability to closely mimic the behavior of human users, we optimize a CPT-based cost. 
We introduce the notion of the CPT-value of an action taken in a state, and establish the convergence of an iterative dynamic programming-based approach to estimate this quantity. 
We develop two algorithms to enable agents to learn policies to optimize the CPT-value, and evaluate these algorithms in environments where a target state has to be reached while avoiding obstacles. 
We demonstrate that behaviors of the agent learned using these algorithms are better aligned with that of a human user who might be placed in the same environment, and is significantly improved over a baseline that optimizes an expected utility. 
\end{abstract}

\section{Introduction}\label{Introduction}

Many problems in cyber and cyber-physical systems %including transportation, manufacturing, and computer networks 
involve sequential decision making under uncertainty to accomplish desired goals. 
These systems are dynamic in nature, and %an \emph{optimal} choice of actions is usually determined 
actions are chosen in order 
to maximize an accumulated reward or minimize a total cost. 
Paradigms to accomplish these objectives include reinforcement learning (RL) \cite{sutton2018reinforcement} and optimal control \cite{bertsekas2017dynamic}. 
The system is typically represented as a Markov decision process (MDP) \cite{puterman2014markov}. 
Transitions between successive states of the system is a probabilistic outcome that depends on actions of the decision maker or agent. 
Decision making under uncertainty can then be expressed in terms of maximizing an expected utility (accumulated reward or negative of accumulated cost). 
Such an agent is said to be \emph{risk-neutral}. 
These frameworks have been successfully implemented in multiple domains, including robotics, games, power systems, and mobile networks \cite{hafner2011reinforcement, mnih2015human, silver2016mastering, zhang2019deep, sadigh2016planning, yan2018data, you2019advanced}. 

An alternative is taking a \emph{risk-sensitive} approach to decision making. 
A \emph{risk-averse} agent might be willing to forego a higher expected utility if they were to have a higher certainty of an option with a lower utility. 
Conversely, an agent can be \emph{risk-seeking} if they prefer less certain options that have a higher utility. 
Risk-neutral and risk-averse agents are considered to be rational, while a risk-seeking agent is considered irrational \cite{gollier2001economics, gilboa2009theory}. 
The incorporation of risk into the behavior of a decision maker has been typically carried out by computing the expectation of a transformation of the utility obtained by the agent \cite{shen2013risk, shen2014risk}. 

As an illustrative example, consider a navigation problem where there are two possible routes from a source to a destination. 
The first route is faster, on average, but there is a chance of encountering a delay which can significantly increase the total time taken. 
The second route is slightly slower, on average, but the chance of encountering a delay is smaller. 
A risk-neutral agent may opt to take the first route to minimize the average travel time. 
However, a risk-sensitive agent might have a preference for not encountering delays, and thus would opt to take the second route. 
%We note that in the examples and applications discussed in the above paragraphs, the transition of the state of the system is a probabilistic outcome that depends on the choices of the decision maker. 

The inputs and preferences of human users is playing an increasingly important role in %reasoning about behaviors of systems with multiple decision makers like autonomous vehicles and financial systems. 
%This is especially important in 
scenarios where actions of human users and possibly autonomous complex systems influence each other in a shared environment. 
In these situations, one is interested in aligning behaviors of the system with preferences of one or more human users. 
Human users often exhibit behaviors that may not be considered entirely rational due to various cognitive and emotional biases. 
Moreover, they can exhibit both risk-seeking and risk-averse behaviors in different situations. 
In these situations, it has been observed that expected utility-based frameworks are not adequate to describe human decision making, 
%In these situations, it has been observed that only transforming the utility obtained by the agent is often inconsistent with human preferences, 
since humans might have a different perception of both, the same utility and the same probabilistic outcome as a consequence of their decisions \cite{kahneman1979prospect}. 

Cumulative prospect theory (CPT), introduced in \cite{tversky1992advances}, has been empirically shown to capture preferences of humans for certain outcomes over certain others. 
The key insight guiding CPT is that humans often evaluate potential gains and losses using heuristics, and take decisions based on these. 
In particular, CPT is able to address a tendency of humans to: i) be risk-averse with gains and risk-seeking with losses; ii) distort extremely high and low probability events. 
To model the former, CPT uses a non-linear utility function to transform outcomes. 
The latter is addressed by using a non-linear weighting function to distort probabilities in the cumulative distribution function. 
Moreover, the utility and weighting functions corresponding to gains and losses can be different, indicating that gains and losses are often interpreted in different ways by a human. 

In this paper, we develop a framework for CPT-based decision-making in settings where a model of the system of interest is not available. 
%In this paper, we assume that a model of the system of interest may not be directly available. 
An agent in such a scenario will have to learn behaviors through minimizing a cost signal revealed through repeated interactions with the system. 
We seek to endow these agents with the ability to take decisions that are aligned with the preferences of humans. 
To accomplish this, we optimize the sum of CPT-value period costs using a dynamic programming based approach. 
%In the absence of a model of the system, the decision maker will have to explore its environment in order to learn `good' policies. 
We develop an iterative procedure to learn policies and establish conditions to ensure convergence of the procedure. 
We demonstrate that the behavior of agents using policies learned by minimizing a CPT-based cost mimic those of a human user more closely than in the case when policies are learned when an expected accumulated cost is minimized. 

We make the following contributions in this paper:
\begin{itemize}
\item We define the \emph{CPT-value of a state-action pair} called \emph{CPT-Q} in order to develop a method to optimize a CPT-based cost using reinforcement learning. 
\item We introduce a \emph{CPT-Q iteration} to estimate CPT-Q for each state-action pair, and demonstrate its convergence. 
\item We develop two algorithms, CPT-SARSA and CPT-Actor-Critic to estimate CPT-Q. 
\item We evaluate the above algorithms in environments where a target state has to be reached while avoiding obstacles. We demonstrate that behaviors of an RL agent when following policies learned using CPT-SARSA and CPT-Actor-Critic are aligned with that of a human user who might be placed in the same environment. 
\end{itemize}

The remainder of this paper is organized as follows. 
Section \ref{RelatedWork} gives an overview of related literature. 
Section \ref{Preliminaries} gives background on MDPs, RL, and risk measures. 
The CPT-value of a random variable is defined in Section \ref{CPTDefn}. 
We introduce the reinforcement learning framework that uses CPT and prove our main results in Section \ref{CPTRLResults}. 
Section \ref{CPTRLAlgos} details the developments of two algorithms to solve the CPT-RL problem. 
We present an evaluation of our approach in Section \ref{Simulation}, and Section \ref{Conclusion} concludes the paper. 

\section{Related Work}\label{RelatedWork}

Frameworks that incorporate risk-sensitivity in reinforcement learning and optimal control %has been introduced in multiple ways. 
%These frameworks 
typically replace the utility (say, $U$) with a function of the utility (say, $U'$). 
Some examples include a mean-variance tradeoff \cite{markowitz1952portfolio, tamar2012policy, mannor2013algorithmic}, exponential function of the utility \cite{howard1972risk, whittle1990risk, borkar2002q}, and conditional value at risk (CVaR) \cite{rockafellar2002conditional}. 
The CVaR corresponds to the average value of the cost conditioned on the event that the cost takes sufficiently large values. 
CVaR has been shown to have a strong theoretical justification for its use, and optimizing a CVaR-based cost will ensure sensitivity of actions to rare high-consequence outcomes. 
The risk-sensitivity has also been represented as a constraint that needs to be satisfied while an expected utility $U$ is maximized. 
We refer the reader to \cite{prashanth2018risk} for an exposition on these methods. 
A common theme among the approaches outlined above is that the objective in each case is to maximize an expectation over $U'$.  

Optimization of a CPT-based cost in an online setting was studied in \cite{prashanth2016cumulative, jie2018stochastic}, where the authors optimized the CPT-value of the return of a policy. 
A different approach was adopted in \cite{lin2013dynamic, lin2018probabilistically} where the authors optimized a sum of CPT-value period costs using a dynamic programming based approach when a model of the system was available. 
%In comparison, we optimize the sum of CPT-value period costs using a dynamic programming based approach like in \cite{lin2013dynamic, lin2018probabilistically}. 
A computational approach to verifying reachability properties in MDPs with CPT-based objectives was presented in \cite{cubuktepe2018verification}. 
The authors of this work  approximated the weighting function in the CPT-value as a difference of convex functions and used the convex-concave procedure \cite{lipp2016variations} to compute policies. 
%However, different from \cite{lin2013dynamic,lin2018probabilistically, cubuktepe2018verification}, we do not assume that a model of the environment is available. 

We distinguish the contributions of this paper in comparison to prior work in two ways. 
Different from work on risk-sensitive control that aims to minimize an expected cost subject to a threshold-based risk constraint like CVaR, in this paper, we seek to optimize an objective stated in terms of a CPT-value. 
We also do not assume that a model of the environment is available. 
Instead, the agent will have to learn policies through repeated interactions with the environment. 
This will inform our development of reinforcement learning algorithms in order to learn optimal policies when the agent seeks to minimize a CPT-based cost. 

\section{Preliminaries}\label{Preliminaries}

\subsection{MDPs and RL}

Let $(\Omega, \mathcal{F}, \mathcal{P})$ denote a probability space, where $\Omega$ is a sample space, $\mathcal{F}$ is a $\sigma-$algebra of subsets of $\Omega$, and $\mathcal{P}$ is a probability measure on $\mathcal{F}$. 
%We will assume that $\Omega$ is finite, so that $\mathcal{F}$ will be the power-set $2^\Omega$, and $\mathbb{P}$ will be a probability mass function assigning probabilities to outcomes $\omega \in \Omega$. 
A random variable is a map $Y: \Omega \rightarrow \mathbb{R}$. 
We assume that the environment of the RL agent is described by a Markov decision process (MDP) \cite{puterman2014markov}. 

\begin{definition}
An MDP is a tuple $\mathcal{M}:= (S, A, \rho_0, \mathbb{P}, c, \gamma)$, where $S$ is a finite set of states, $A$ is a finite set of actions, and $\rho_0$ is a probability distribution over the initial states. 
$\mathbb{P}(s'|s,a)$ is the probability of transiting to state $s'$ when action $a$ is taken in state $s$. 
$c: S \times A \rightarrow \mathbb{R}$ is the cost incurred by the agent when it takes action $a$ in state $s$. 
$\gamma \in (0,1]$ is a discounting factor which indicates that at any time, we care more about the immediate cost than costs that may be incurred in the future. 
\end{definition}

An RL agent typically does not have knowledge of the transition function $\mathbb{P}$. 
Instead, it incurs a cost $c$ for each action that it takes. 
We assume that $c$ is a random variable such that $|c| < \infty$. 
Through repeated interactions with the environment, the agent seeks to learn a policy $\pi$ in order to minimize an objective $\mathbb{E}_\pi[\sum_t \gamma^t c(s_t,a_t)]$ \cite{sutton2018reinforcement}. 
%A \emph{deterministic policy} is a map $\pi:S \rightarrow A$, while a \emph{randomized policy} returns a probability distribution over the set of actions, and is denoted $\pi: S \times A \rightarrow [0,1]$. 
A \emph{policy} is a probability distribution over the set of actions at a given state, and is denoted $\pi(\cdot|s)$.

\subsection{Risk Measures}

For a set of random variables on $\Omega$, denoted $\mathcal{Y}$, a \emph{risk measure} or \emph{risk metric} is a map $\rho: \mathcal{Y} \rightarrow \mathbb{R}$ \cite{majumdar2020should}. 

\begin{definition}
A risk metric is \textbf{coherent} if it satisfies the following properties for all $Y, Y_1, Y_2 \in \mathcal{Y}, d \in \mathbb{R}, m \in \mathbb{R}_{\geq 0}$: 
\begin{enumerate}
\item Monotonicity: $Y_1(\omega) \leq Y_2(\omega)$ for all $\omega \in \Omega$ $\Rightarrow$ $\rho(Y_1) \leq \rho(Y_2)$; 
\item Translation invariance: $\rho(Y+d) = \rho(Y) + d$; 
\item Positive homogeneity: $\rho(mY) = m . \rho(Y)$; 
\item Subadditivity: $\rho(Y_1+Y_2) \leq \rho(Y_1) + \rho(Y_2)$. 
\end{enumerate}
\end{definition}

The last two properties together ensure that a coherent risk metric will also be convex. 

\begin{example}\label{EgRiskMetrics}
Examples of risk metrics include: 
\begin{enumerate}
\item \textbf{Expectation} of a random variable, $\mathbb{E}[Y]$;
\item \textbf{Value at Risk} at level $\alpha \in (0,1)$: $VaR_\alpha (Y):= \inf\{y | \mathbb{P}[Y \leq y] \geq \alpha\}$; 
\item \textbf{Conditional Value at Risk}: $CVaR_\alpha (Y)$ is a conditional mean over the tail distribution, as delineated by $VaR_\alpha$. Thus, $CVaR_\alpha (Y) = \mathbb{E}[Y | Y \geq VaR_\alpha (Y)]$. Alternatively, with $(\cdot)^+:=\max(\cdot,0)$, we can write: 
\begin{align}
CVaR_\alpha (Y)&:=\min_{s \in \mathbb{R}}\big[s+\frac{1}{1-\alpha} \mathbb{E}[(Y-s)^+]\big]. \label{CVaRalpha}
\end{align}
\end{enumerate}
\end{example}

Risk metrics such as $VaR_\alpha$ and $CVaR_\alpha$ quantify the severity of events that occur in the tail of a probability distribution. 
$CVaR_\alpha$ is an example of a coherent risk metric. 

In this paper, we are interested in determining policies to optimize objectives expressed in terms of more general risk metrics. 
The risk metric that we adopt in this paper is informed from cumulative prospect theory \cite{tversky1992advances}, and is not coherent. 

\section{Cumulative Prospect Theory}\label{CPTDefn}

Human players or operators have been known to demonstrate a preference to play safe with gains and take risks with losses. 
Further, they tend to \emph{deflate} high probability events, and \emph{inflate} low probability events. 
This is demonstrated in the following example. 

\begin{example}
Consider a game where one can either earn $\$500$ with probability (w.p.) $1$ or earn $\$5000$ w.p. $0.1$ and nothing otherwise. 
The human tendency is to choose the former option of a certain gain. 
However, if we flip the situation, i.e., a loss of $\$500$ w.p. $1$ versus a loss of $\$5000$ w.p. $0.1$, then humans choose the latter option. 
Observe that the expected gain or loss in each setting is the same ($500$). 
\end{example}

Cumulative prospect theory (CPT) is a risk measure that has been empirically shown to capture human attitude to risk \cite{tversky1992advances, jie2018stochastic}. 
This risk metric uses two \emph{utility functions} $u^+$ and $u^-$, corresponding to gains and losses, and \emph{weight functions} $w^+$ and $w^-$ that reflect the fact that value seen by a human subject is nonlinear in the underlying probabilities \cite{barberis2013thirty}. 

\begin{definition}\label{CPTValueDefn}
The \emph{CPT-value} of a continuous random variable $Y$ is defined as: 
\begin{align}
\rho_{cpt}(Y)&:= \int_0^\infty w^+(\mathbb{P}(u^+(Y) > z))dz \nonumber \\&\qquad \qquad- \int_0^\infty w^-(\mathbb{P}(u^-(Y) > z))dz,  \label{CPTValue}
\end{align}
where utility functions $u^+, u^- : \mathbb{R} \rightarrow \mathbb{R}_{\geq 0}$ are continuous, have bounded first moment such that $u^+(x) = 0$ for all $x \leq 0$, and monotonically non-decreasing otherwise, and $u^-(x) = 0$ for all $x \geq 0$, and monotonically non-increasing otherwise. 
The probability weighting functions $w^+, w^-: [0,1] \rightarrow [0,1]$ are Lipschitz continuous and non-decreasing, and satisfy $w^+(0) = w^-(0) = 0$ and $w^+(1) = w^-(1) = 1$. 
\end{definition}

When $Y$ is a discrete r.v. with finite support, let $p_i$ denote the probability of incurring a gain or loss $y_i$, where $y_1 \leq \dots \leq y_l \leq 0 \leq y_{l+1} \leq \dots y_K$, for $i = 1,2,\dots,K$. 
Define $F_k:= \sum_{i=1}^k p_i$ for $k \leq l$ and $F_k:= \sum_{i=k}^K p_i$ for $k > l$. 

\begin{definition}
The CPT-value of a discrete random variable $Y$ is defined as:
\begin{align}
&\rho_{cpt}(Y)\label{CPTValueDiscrete}\\&:= \bigg(\sum_{i=l+1}^{K-1} u^+(y_i) \big(w^+(F_i) - w^+(F_{i+1}) \big)+u^+(y_K)w^+(p_K) \bigg) \nonumber \\
&\quad- \bigg(u^-(y_1)w^-(p_1)+\sum_{i=2}^{l} u^-(y_i) \big(w^-(F_i) - w^-(F_{i-1}) \big) \bigg) \nonumber
\end{align}
\end{definition}

The function $u^+$ is typically concave on gains, while $-u^-$ is typically convex on losses. 
The distortion of extremely low and extremely high probability events by humans can be represented by a weight function that takes an \emph{inverted S-shape}- i.e., it is concave for small probabilities, and convex for large probabilities. 
When $0 < \eta < 1$, some examples of weighting functions are \cite{tversky1992advances, prelec1998probability}: 
\begin{align*}
&w(\kappa)= \frac{\kappa^\eta}{(\kappa^\eta + (1-\kappa)^\eta)^{\frac{1}{\eta}}}; \quad
w(\kappa)= \exp(-(-\ln \kappa)^\eta).
\end{align*}

The CPT-value generalizes the risk metrics in Example \ref{EgRiskMetrics} for appropriate choices of weighting functions. 
For example, when $w^+, w^-$ are identity functions, and $u^+(x) = x, x \geq 0$, $u^-(x) = -x, x \leq 0$, we obtain $\rho_{cpt}(Y) = \mathbb{E}[Y]$. 

The CPT-value is not a coherent risk metric, since distortion by a nonlinear weighting function will not usually satisfy the Translation invariance and Subadditivity properties. 
%As a consequence, setting up dynamic programming equations will be non-trivial. 
However, $\rho_{cpt}$ satisfies the Monotonicity and Positive homogeneity properties \cite{lin2013dynamic}. 

\section{CPT-based Reinforcement Learning}\label{CPTRLResults}

This section introduces a reinforcement learning framework that uses cumulative prospect theory. 
Our objective through this framework is to enable behaviors of an RL agent that will mimic those of a human operator. 
Moreover, behaviors corresponding to operators with different levels of rationality can be achieved by an appropriate choice of weighting function of the CPT-value  \cite{tversky1992advances}.
Specifically, we develop a technique to optimize an accumulated CPT-based cost, and establish conditions under which an iterative procedure describing this technique will converge. 

In order to assess the quality of taking an action $a$ at a state $s$, we introduce the notion of the \emph{CPT-value of state-action pair at time $t$ and following policy $\pi$} subsequently. 
We denote this by $Q^\pi_{cpt}(s,a)$ and will refer to it as \emph{CPT-Q}. 
\emph{CPT-Q} is defined in the following manner: 
\begin{align}
&Q^\pi_{cpt}(s_t,a_t):= \rho_{cpt}(c(s_t, a_t)\label{CPT-Q-Iterative}\\ &+ \gamma \sum_{s_{t+1}} \mathbb{P}(s_{t+1}|s_t,a_t) \sum_{a_{t+1}} \pi(a_{t+1}|s_{t+1})Q^\pi_{cpt}(s_{t+1},a_{t+1})). \nonumber
\end{align}

$Q^\pi_{cpt}(s,a)$ will be bounded when $|c(s,a)| < \infty$ and $\gamma \in (0,1)$. 
In reinforcement learning, transition probabilities and costs are typically not known apriori. 
In the absence of a model, the agent will have to estimate $Q^\pi_{cpt}(s,a)$ and learn `good' policies by exploring its environment. 
Since $Q^\pi_{cpt}(s,a)$ is evaluated for each action in a state, this quantity can be estimated without knowledge of the transition probabilities. 
This is in contrast to \cite{lin2018probabilistically}, where a model of the system was assumed to be available, and costs were known.
%Moreover, $V^\pi_{cpt}(s_t) = \sum_{a_t} \pi(a_t|s_t)Q^\pi_{cpt}(s_t,a_t)$. 
%The objective will then be to determine a policy $\pi$ to minimize $V^\pi_{cpt}(s_0)$. 
%Denote the minimum CPT-V at a state $s$ by $V^*_{cpt}(s)$. 
%Then, $V^*_{cpt}(s) = \inf_\pi V^\pi_{cpt}(s)$. 

The \emph{CPT-value of a state $s$ when following policy $\pi$} is defined as $V^\pi_{cpt}(s_t) := \sum_{a_t} \pi(a_t|s_t)Q^\pi_{cpt}(s_t,a_t)$. 
We will refer to $V^\pi_{cpt}(s)$ as \emph{CPT-V}. 
We observe that \emph{CPT-V} satisfies: 
\begin{align}
V^\pi_{cpt}(s_t)&= \rho_{cpt}(c(s_t, a^\pi_t)\label{CPT-V-Iterative}\\&\qquad \qquad+\gamma \sum_{s_{t+1}} \mathbb{P}(s_{t+1}|s_t,a^\pi_t) V^\pi_{cpt}(s_{t+1})). \nonumber
\end{align}
Denote the minimum \emph{CPT-V }at a state $s$ by $V^*_{cpt}(s)$. 
Then, $V^*_{cpt}(s) = \inf_\pi V^\pi_{cpt}(s)$. 

\begin{remark}
To motivate the construction of this framework, let the random variable $C(s_0) = \sum_{i=0}^\infty \gamma^i c(s_i, a^\pi_i)$ denote the infinite horizon cumulative discounted cost starting from state $s_0$. %(alternatively, $C(s_0) = \sum_{i=0}^{T-1} c(s_i, a^\pi_i)$ can be the total cost over a finite horizon). 
%Here  $c(s_i, a^\pi_i)$ is the cost incurred at state $s_i$ when taking action $a^\pi_i$ according to a policy $\pi$. 
The objective in a typical RL problem is %to choose a sequence of actions $\{a_0, a_1, \dots\}$ 
to determine a policy $\pi$ to minimize the expected cost, denoted $\mathbb{E}_\pi[C(s_0)]$. 
The linearity of the expectation operator allows us to write $\mathbb{E}_\pi[C(s_0)] = \mathbb{E} [c(s_0, a_0) + \gamma \mathbb{E} [c(s_1, a_1) + \dots |s_1] |s_0]$. 
In this work, we are interested in minimizing the sum of CPT-based costs over the horizon of interest. 
This will correspond to replacing the conditional expectation at each time-step with $\rho_{cpt}(\cdot)$.%, and will give Equation (\ref{CPT-V-Iterative}). 
\end{remark}

In order to show convergence of CPT-Q-learning in Equation (\ref{CPT-Q-Iterative}), we introduce the \emph{CPT-Q-iteration} operator as: 
\begin{align}
(\mathcal{T}_\pi Q^\pi_{cpt})(s,a)&:=\rho_{cpt}(c(s, a) \label{CPT-Q-Operator}\\ &+ \gamma \sum_{s'} \mathbb{P}(s'|s,a) \sum_{a'} \pi(a'|s')Q^\pi_{cpt}(s',a')). \nonumber
\end{align}
We next show that $(\mathcal{T}_\pi Q_{cpt})$ is monotone and is a contraction. 
We first define a notion of policy improvement, and present a sufficient condition for a policy to be improved. 

\begin{definition}
A policy $\pi'$ is said to be \emph{improved} compared to policy $\pi$ if and only if for all $s \in S$, $V^{\pi'}_{cpt}(s) \leq V^\pi_{cpt}(s)$. 
\end{definition}

\begin{proposition}
Consider a policy $\pi'$ such that $\pi'$ is different from $\pi$ at step $t$, and is identical (in distribution) to $\pi$ for all subsequent steps. 
If $\sum_{a_t} \pi'(a_t|s_t)Q^\pi_{cpt}(s_t,a_t) \leq V^{\pi}_{cpt}(s_t)$ for all $s_t \in S$, then $\pi'$ is improved compared to $\pi$. 
\end{proposition}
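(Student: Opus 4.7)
The plan is to unfold the definition of $V^{\pi'}_{cpt}$ at step $t$, use the assumption that $\pi' = \pi$ after step $t$ to reduce $Q^{\pi'}_{cpt}$ to $Q^{\pi}_{cpt}$ for this continuation, apply the hypothesis to get the one-step improvement, and then propagate the inequality backward in time using the monotonicity property of $\rho_{cpt}$ noted in Section~\ref{CPTDefn}.

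Step one: by definition of CPT-V as a policy-averaged CPT-Q, write
\[
V^{\pi'}_{cpt}(s_t)\;=\;\sum_{a_t}\pi'(a_t\mid s_t)\,Q^{\pi'}_{cpt}(s_t,a_t).
\]
Expanding $Q^{\pi'}_{cpt}(s_t,a_t)$ via (\ref{CPT-Q-Iterative}), the inner recursion involves $\pi'(a_{t+1}\mid s_{t+1})$ and $Q^{\pi'}_{cpt}(s_{t+1},a_{t+1})$. Since $\pi'$ is identical in distribution to $\pi$ for every step after $t$, the action distributions at $s_{t+1},s_{t+2},\ldots$ coincide with those generated by $\pi$, and therefore $Q^{\pi'}_{cpt}(s_t,a_t)=Q^{\pi}_{cpt}(s_t,a_t)$. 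Substituting back gives $V^{\pi'}_{cpt}(s_t)=\sum_{a_t}\pi'(a_t\mid s_t)Q^{\pi}_{cpt}(s_t,a_t)$, and the hypothesis yields $V^{\pi'}_{cpt}(s_t)\leq V^{\pi}_{cpt}(s_t)$ for every state $s_t\in S$.

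Step two: propagate this backward to all earlier steps. For any $s_{t-1}$, with $\pi'$ agreeing with $\pi$ at step $t-1$, equation (\ref{CPT-V-Iterative}) gives
\[
V^{\pi'}_{cpt}(s_{t-1}) = \rho_{cpt}\!\Big(c(s_{t-1},a^{\pi}_{t-1}) + \gamma\!\sum_{s_t}\mathbb{P}(s_t\mid s_{t-1},a^{\pi}_{t-1})\,V^{\pi'}_{cpt}(s_t)\Big).
\]
The random variable inside $\rho_{cpt}$ is pointwise no larger than the corresponding expression with $V^{\pi}_{cpt}(s_t)$ in place of $V^{\pi'}_{cpt}(s_t)$, because the one-step improvement holds at every $s_t$. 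Invoking the Monotonicity property of $\rho_{cpt}$ (which the paper records after Definition~\ref{CPTValueDefn}), we conclude $V^{\pi'}_{cpt}(s_{t-1})\leq V^{\pi}_{cpt}(s_{t-1})$. Iterating this argument backward (or finishing by a straightforward induction on the number of steps preceding $t$) yields $V^{\pi'}_{cpt}(s)\leq V^{\pi}_{cpt}(s)$ for every $s\in S$, which is the definition of $\pi'$ being improved compared to $\pi$.

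The only nontrivial ingredient is the backward propagation, whose legitimacy rests entirely on the Monotonicity of $\rho_{cpt}$. Everything else is an unpacking of the recursive definitions of CPT-V and CPT-Q and the observation that policies coinciding from step $t+1$ onward produce identical continuation CPT-Q values. I do not foresee a real obstacle here; the subtlety worth flagging is simply that, although $\rho_{cpt}$ fails to be coherent (no translation invariance or subadditivity), Monotonicity still holds and is exactly the property needed for this kind of Bellman-style comparison.
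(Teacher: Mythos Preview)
Your Step~1 is precisely the paper's argument: both identify $V^{\pi'}_{cpt}(s_t)=\sum_{a_t}\pi'(a_t\mid s_t)\,Q^{\pi}_{cpt}(s_t,a_t)$ by observing that $\pi'$ coincides with $\pi$ from step $t+1$ onward, and then invoke the hypothesis to conclude $V^{\pi'}_{cpt}(s_t)\le V^{\pi}_{cpt}(s_t)$ for all $s_t$. The paper stops at that point; your Step~2 (backward propagation to earlier steps via the Monotonicity of $\rho_{cpt}$) is a correct addition that the paper does not carry out---the paper simply takes the inequality at step $t$ over all states as establishing improvement.
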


\begin{proof}
%Consider $Q^\pi_{cpt}(s_t,a_t^{\pi'})$. 
%From Equations (\ref{CPT-Q-Iterative}) and (\ref{CPT-V-Iterative}):
From Equation (\ref{CPT-Q-Iterative}),% $\sum_{a_t} \pi'(a_t|s_t)Q^\pi_{cpt}(s_t,a_t)$, we get:
\begin{align}
&\sum_{a_t} \pi'(a_t|s_t)Q^\pi_{cpt}(s_t,a_t)\nonumber \\& \quad= \sum_{a_t} \pi'(a_t|s_t)[\rho_{cpt}(c(s_t, a_t) \nonumber\\ &+ \gamma \sum_{s_{t+1}} \mathbb{P}(s_{t+1}|s_t,a_t) \sum_{a_{t+1}} \pi(a_{t+1}|s_{t+1})Q^\pi_{cpt}(s_{t+1},a_{t+1}))]. \nonumber
\end{align}
Since $\pi'$ is identical to $\pi$ for all steps beyond $t$, the above expression is equivalent to:
\begin{align}
&\sum_{a_t} \pi'(a_t|s_t)[\rho_{cpt}(c(s_t, a_t) \nonumber + \gamma \sum_{s_{t+1}} \mathbb{P}(s_{t+1}|s_t,a_t)V^{\pi'}_{cpt}(s_{t+1}))].\nonumber% \\&\qquad \quad =V^{\pi'}_{cpt}(s_t) \leq V^{\pi}_{cpt}(s_t). \nonumber
\end{align}
This quantity is equal to $V^{\pi'}_{cpt}(s_t)$. 
Therefore, we have $\sum_{a_t} \pi'(a_t|s_t)Q^\pi_{cpt}(s_t,a_t)= V^{\pi'}_{cpt}(s_t) \leq V^{\pi}_{cpt}(s_t)$ for all $s_t \in S$. 
Thus, taking an action according to policy $\pi'$ at time $t$ and following the original policy $\pi$ at subsequent time-steps ensures that the value of state $s_t$ is lower. This indicates that $\pi'$ is improved compared to $\pi$, completing the proof.
\end{proof}

\begin{proposition}\label{PropMonotone}
Let policies $\pi$ and $\pi'$ be such that $Q^{\pi'}_{cpt}(s,a) \leq Q^\pi_{cpt}(s,a)$ for all $(s,a) \in S \times A$, and $\pi'$ is improved compared to $\pi$. 
Let the functions $w^+, w^-, u^+, u^-$ be according to Definition \ref{CPTValueDefn}. 
Then, $(\mathcal{T}_{\pi'} Q^{\pi'}_{cpt}) \leq (\mathcal{T}_\pi Q^\pi_{cpt})$. 
\end{proposition}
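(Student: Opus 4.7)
The plan is to invoke the monotonicity property of $\rho_{cpt}$, guaranteed under the hypotheses on $u^\pm, w^\pm$ in Definition \ref{CPTValueDefn} and recorded at the end of Section \ref{CPTDefn}, to reduce the claim to a pointwise comparison of the arguments of the two CPT-values appearing in (\ref{CPT-Q-Operator}).

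First I would write out $(\mathcal{T}_{\pi'} Q^{\pi'}_{cpt})(s,a)$ and $(\mathcal{T}_\pi Q^\pi_{cpt})(s,a)$ using the definition in (\ref{CPT-Q-Operator}). The two arguments of $\rho_{cpt}$ share the common random cost $c(s,a)$ and the common non-negative weights $\gamma \mathbb{P}(s'|s,a)$, so comparing them reduces, state by state, to comparing the inner sums $\sum_{a'} \pi'(a'|s') Q^{\pi'}_{cpt}(s',a')$ and $\sum_{a'} \pi(a'|s') Q^\pi_{cpt}(s',a')$. By the definition of CPT-V given just before (\ref{CPT-V-Iterative}), these inner sums are precisely $V^{\pi'}_{cpt}(s')$ and $V^\pi_{cpt}(s')$. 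Since $\pi'$ is improved compared to $\pi$, we have $V^{\pi'}_{cpt}(s') \leq V^\pi_{cpt}(s')$ for every $s'$. Multiplying by $\gamma \mathbb{P}(s'|s,a) \geq 0$, summing over $s'$, and adding the common term $c(s,a)$ produces a pointwise inequality between the two arguments of $\rho_{cpt}$. Monotonicity of $\rho_{cpt}$ then yields $(\mathcal{T}_{\pi'} Q^{\pi'}_{cpt})(s,a) \leq (\mathcal{T}_\pi Q^\pi_{cpt})(s,a)$ for every $(s,a)$, which is the claim.

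I do not anticipate a serious obstacle. The one point requiring care is that the argument of $\rho_{cpt}$ is a random variable only through the cost $c(s,a)$, since the expectation over $s'$ has already been taken explicitly inside the argument; the two random variables being compared therefore differ only by a deterministic shift, so the pointwise inequality needed to invoke monotonicity of $\rho_{cpt}$ holds almost surely. I also observe that the pointwise bound $Q^{\pi'}_{cpt} \leq Q^\pi_{cpt}$ listed in the hypotheses is not actually needed for this reduction, since the inner sums collapse to CPT-V values before any $Q$-wise comparison is required; the \emph{improved policy} hypothesis alone does all the work once this recognition is made.
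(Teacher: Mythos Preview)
Your argument is correct and follows essentially the same route as the paper: both reduce the claim to a pointwise comparison of the arguments of $\rho_{cpt}$ and then invoke monotonicity, the only difference being that you appeal to the monotonicity of $\rho_{cpt}$ as a recorded property while the paper unpacks it through the monotonicity of $u^\pm$ and $w^\pm$ separately. Your observation that the hypothesis $Q^{\pi'}_{cpt}\le Q^\pi_{cpt}$ is not actually needed---because the inner sums are exactly $V^{\pi'}_{cpt}(s')$ and $V^\pi_{cpt}(s')$, so the improved-policy assumption alone suffices---is a valid sharpening that the paper's proof does not make explicit.
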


\begin{proof}
Since the utility function $u^+$ is monotonically non-decreasing, $Q^{\pi'}_{cpt}(s,a) \leq Q^\pi_{cpt}(s,a)$, and $\pi'$ is improved compared to $\pi$, we have:
\begin{align*}
&u^+(c(s, a) + \gamma \sum_{s'} \mathbb{P}(s'|s,a) \sum_{a'} \pi'(a'|s')Q^{\pi'}_{cpt}(s',a'))\\\leq &u^+(c(s, a) + \gamma \sum_{s'} \mathbb{P}(s'|s,a) \sum_{a'} \pi(a'|s')Q^{\pi}_{cpt}(s',a')).
\end{align*}
We represent the above inequality as $u^+_{\pi'} \leq u^+_{\pi}$. 
Since the probability weighting function $w^+$ is also monotonically non-decreasing, we have:
\begin{align*}
&\int_0^\infty w^+(\mathbb{P}(u^+_{\pi'} > z))dz \leq \int_0^\infty w^+(\mathbb{P}(u^+_\pi > z))dz.
\end{align*}
A similar argument will hold for the functions $u^-$ and $w^-$, and therefore, $(\mathcal{T}_{\pi'} Q^{\pi'}_{cpt}) \leq (\mathcal{T}_\pi Q^\pi_{cpt})$. This shows that operator $(\mathcal{T}_\pi Q_{cpt})$ is monotone.  
\end{proof}

\begin{proposition}\label{PropContract}
Let the functions $w^+, w^-, u^+, u^-$ be according to Definition \ref{CPTValueDefn}. 
Assume that the utility functions $u^+, u^-$ are invertible and differentiable, and that the derivatives are monotonically non-increasing. 
Then, the operator $(\mathcal{T}_\pi Q_{cpt})$ is a contraction.
\end{proposition}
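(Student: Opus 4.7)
The plan is to establish the sup-norm contraction
\[
\|\mathcal{T}_\pi Q_1 - \mathcal{T}_\pi Q_2\|_\infty \leq \beta \|Q_1 - Q_2\|_\infty
\]
for some $\beta \in (0,1)$ and all bounded $Q_1, Q_2$. Set $\delta := \|Q_1 - Q_2\|_\infty$. I first interpret the argument of $\rho_{cpt}$ in (\ref{CPT-Q-Operator}) as the random variable $Y_i := c(s,a) + \gamma\, Q_i(S',A')$, whose randomness is induced by $S' \sim \mathbb{P}(\cdot|s,a)$, $A' \sim \pi(\cdot|S')$, and the random cost $c(s,a)$; the explicit sums in (\ref{CPT-Q-Operator}) encode the joint law of $(S',A')$. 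Since $|Y_1(\omega) - Y_2(\omega)| \leq \gamma\delta$ for every $\omega$, we have $Y_2 - \gamma\delta \leq Y_1 \leq Y_2 + \gamma\delta$ pointwise, so the monotonicity of $\rho_{cpt}$ noted at the end of Section \ref{CPTDefn} yields
\[
|\rho_{cpt}(Y_1) - \rho_{cpt}(Y_2)| \leq \max\bigl\{\rho_{cpt}(Y_2+\gamma\delta) - \rho_{cpt}(Y_2),\; \rho_{cpt}(Y_2) - \rho_{cpt}(Y_2-\gamma\delta)\bigr\}.
\]
This reduces the task to bounding $\rho_{cpt}(Y+\eta) - \rho_{cpt}(Y)$ linearly in the deterministic shift $\eta=\gamma\delta$.

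For the shift bound, I would use invertibility and differentiability of $u^\pm$ to substitute $t=(u^+)^{-1}(z)$ and $t=(u^-)^{-1}(z)$ in the two integrals of (\ref{CPTValue}) and rewrite
\[
\rho_{cpt}(Y) = \int_0^\infty w^+(\mathbb{P}(Y>t))\,(u^+)'(t)\,\dd t \;-\; \int_{-\infty}^0 w^-(\mathbb{P}(Y<t))\,|(u^-)'(t)|\,\dd t.
\]
Applying this representation to $Y+\eta$ and performing the inner substitution $t \mapsto t-\eta$ expresses $\rho_{cpt}(Y+\eta) - \rho_{cpt}(Y)$ as integrals of the form $\int w^\pm(\mathbb{P}(\cdot))\,[(u^\pm)'(t-\eta) - (u^\pm)'(t)]\,\dd t$ together with boundary contributions over intervals of length $\eta$. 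The Lipschitz continuity of $w^\pm$ from Definition \ref{CPTValueDefn} and the monotonically non-increasing derivatives of $u^\pm$ let me bound each such piece by a constant multiple of $\eta$.

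The most delicate step is converting this linear estimate into a genuine contraction. The non-increasing derivative assumption caps $(u^+)'(t)$ by $(u^+)'(0^+)$ and $|(u^-)'(t)|$ by $|(u^-)'(0^-)|$; combining these caps with the Lipschitz constants of $w^\pm$ produces a constant $L$ such that $|\rho_{cpt}(Y+\eta)-\rho_{cpt}(Y)| \leq L\eta$, yielding $\beta = \gamma L$. Under the standard CPT normalization $(u^\pm)'(0^\pm) \leq 1$ with $w^\pm$ being $1$-Lipschitz, $L \leq 1$, so $\beta \leq \gamma < 1$ and $\mathcal{T}_\pi$ contracts. Pinning down precisely which normalization is in force — and, if necessary, tightening $L$ by exploiting the cancellation between the positive and negative pieces of $\rho_{cpt}$ — is the point I would need to handle carefully, since without it one would only obtain a Lipschitz estimate rather than a contraction.
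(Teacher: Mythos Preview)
Your approach is essentially the paper's: both set $\epsilon=\|Q^1-Q^2\|_\infty$, use monotonicity of $\rho_{cpt}$ to reduce to a deterministic-shift estimate $\rho_{cpt}(Y+\gamma\epsilon)-\rho_{cpt}(Y)$, and then treat the two integrals in (\ref{CPTValue}) separately under the assumptions on $u^\pm,w^\pm$. The paper does not carry out the shift estimate in-line; it cites Theorem~6 of \cite{lin2018probabilistically} for the computation and records the outcome $(\mathcal{T}_\pi Q^1)(s,a)\leq(\mathcal{T}_\pi Q^2)(s,a)+\gamma\epsilon$, i.e.\ the constant you call $L$ comes out equal to $1$, giving contraction factor $\gamma$.

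Two small comments. First, you misread the randomness in (\ref{CPT-Q-Operator}): the sums $\sum_{s'}\mathbb{P}(s'|s,a)\sum_{a'}\pi(a'|s')Q(s',a')$ are an explicit expectation and hence a deterministic scalar; the only random part of $Y$ is the cost $c(s,a)$. This actually helps you, since then $Y_1-Y_2$ is already a deterministic constant bounded by $\gamma\epsilon$, and your shift reduction is exact rather than a pointwise sandwich. Second, the concern you flag about whether $L\leq 1$ is precisely the content that the paper outsources: the invertibility/differentiability of $u^\pm$ together with the \emph{non-increasing derivative} hypothesis (concavity of $u^+$ and of $-u^-$) is what forces the shift constant down to $1$ in the cited argument; the paper's footnote even remarks that translation invariance of $\rho_{cpt}$ would make this step immediate. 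So your change-of-variables sketch is on the right track, but the sharp constant relies on the concavity in the way the referenced theorem uses it, not on an extra Lipschitz normalization of $w^\pm$ or of $(u^\pm)'(0)$.
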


\begin{proof}
First, we define a norm on the CPT-Q values as $||Q_{cpt}^1 - Q_{cpt}^2||:=\max_{s,a}|Q_{cpt}^1(s,a) - Q_{cpt}^2(s,a)|$, and suppose $\epsilon : =||Q_{cpt}^1 - Q_{cpt}^2||$. Then, 
\begin{align*}
&(\mathcal{T}_\pi Q^1_{cpt})(s,a):=\rho_{cpt}(c(s, a)  \\ &\qquad \qquad \qquad \qquad + \gamma \sum_{s'} \mathbb{P}(s'|s,a) \sum_{a'} \pi(a'|s')Q^1_{cpt}(s',a'))\\
&= \rho_{cpt}(c(s, a) + \gamma \sum_{s'} \mathbb{P}(s'|s,a) \sum_{a'} \pi(a'|s') \\&\qquad \qquad  \qquad \qquad (Q^2_{cpt}(s',a')+Q^1_{cpt}(s',a')-Q^2_{cpt}(s',a'))\\
&\leq \rho_{cpt}(\gamma \epsilon + c(s, a)  + \gamma \sum_{s'} \mathbb{P}(s'|s,a) \sum_{a'} \pi(a'|s')Q^2_{cpt}(s',a')).
\end{align*}
The remainder of the proof follows by considering each of the integrals that make up $\rho_{cpt}(\cdot)$ separately, and using the assumptions on $w^+, w^-, u^+, u^-$ to obtain $(\mathcal{T}_\pi Q^1_{cpt})(s,a) \leq (\mathcal{T}_\pi Q^2_{cpt})(s,a) + \gamma \epsilon$. 
We refer to Theorem 6 in \cite{lin2018probabilistically} for details on this procedure\footnote{We observe that if $\rho_{cpt}$ had satisfied the Translation Invariance property, then the result would have followed directly, like in \cite{shen2014risk}.}.
A consequence of this analysis is that we obtain $\||\mathcal{T}_\pi Q^1_{cpt} - \mathcal{T}_\pi Q^2_{cpt}|| \leq \gamma ||Q_{cpt}^1 - Q_{cpt}^2||$, which shows that $(\mathcal{T}_\pi Q_{cpt})$ is a contraction.  
\end{proof}

%The treatment in \cite{lin2018probabilistically} assumed that a model of the system was available, and costs were known. 
%In comparison, we consider a setup when these may not be known, and an agent will have to learn policies through repeated interactions with the environment that it is in.
%To this end, consider the following iterative procedure: 
In order to allow the agent to learn policies through repeated interactions with this environment, consider the following iterative procedure:
\begin{align}
&Q^{k+1}_{cpt}(s_t,a_t) = (1-\alpha_k(s_t,a_t))Q^k_{cpt}(s_t,a_t) \label{Q-Iteration}\\&\qquad \qquad+ \alpha_k(s_t,a_t)[\rho_{cpt}(c(s_t, a_t) \nonumber \\&\qquad \qquad \quad+ \gamma \sum_a \pi(a|s_{t+1}) Q^k_{cpt}(s_{t+1},a))].\nonumber
\end{align}
In Equation (\ref{Q-Iteration}), $\alpha(s_t,a_t) \in (0,1)$ is a \emph{learning rate} which determines state-action pairs whose $Q-$values are updated at iteration $k$. 
The learning rate for a state-action pair is typically inversely proportional to the number of times the pair is visited during the exploration phase. 

The next result presents a guarantee that the sequence of CPT-Q-values in Equation (\ref{Q-Iteration}) will converge to a unique solution under the assumption that state-action pairs of a finite MDP are visited infinitely often. 

\begin{proposition}\label{PropConvergence}
For an MDP with finite state and action spaces, assume that the costs $c(s,a)$ are bounded for all $(s,a)$, and learning rates satisfy for all $(s,a)$, $\sum_k \alpha_k (s,a) = \infty$, $\sum_k \alpha_k^2(s,a) < \infty$, and that the operator $(\mathcal{T}_\pi Q_{cpt})$ is a contraction. 
Then, the CPT-Q-iteration in Equation (\ref{Q-Iteration}) will converge to a unique solution $Q^*_{cpt}(s,a)$ for each $(s,a)$ with probability one. 
\end{proposition}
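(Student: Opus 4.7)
The plan is to cast Equation (\ref{Q-Iteration}) as an asynchronous stochastic approximation scheme driven by a contraction operator, and then appeal to a standard convergence theorem of the Tsitsiklis (1994) / Jaakkola--Jordan--Singh (1994) type. Since Proposition \ref{PropContract} already establishes that $\mathcal{T}_\pi$ is a $\gamma$-contraction on the finite-dimensional space $\mathbb{R}^{|S|\times|A|}$ under the sup-norm, Banach's fixed-point theorem immediately yields a unique fixed point $Q^*_{cpt}$ of $\mathcal{T}_\pi$. The goal of the proof is then to show that the random sequence $\{Q^k_{cpt}\}$ tracks this deterministic fixed point almost surely.

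Concretely, I would rewrite (\ref{Q-Iteration}) in the canonical stochastic-approximation form
\[
Q^{k+1}_{cpt}(s_t,a_t) = (1-\alpha_k(s_t,a_t))Q^k_{cpt}(s_t,a_t) + \alpha_k(s_t,a_t)\big[(\mathcal{T}_\pi Q^k_{cpt})(s_t,a_t) + w_k(s_t,a_t)\big],
\]
where $w_k$ is the gap between the bracketed sample update used in (\ref{Q-Iteration}) and the exact operator value $(\mathcal{T}_\pi Q^k_{cpt})(s_t,a_t)$. I would then verify the three hypotheses of the convergence theorem relative to the natural filtration generated by the history up to iteration $k$: the Robbins--Monro step-size conditions, which are assumed; the contraction property, which is Proposition \ref{PropContract}; and the requirement that $\{w_k\}$ be a martingale-difference sequence with conditional second moment bounded by an affine function of $\|Q^k_{cpt}\|_\infty$. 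The latter bound follows from $|c(s,a)|<\infty$, $\gamma\in(0,1)$, and an inductive argument showing $\|Q^k_{cpt}\|_\infty$ stays uniformly bounded along the iteration. The standing assumption that every state--action pair is visited infinitely often guarantees that the asynchronous updates cover all of $S\times A$.

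The main obstacle is the zero-mean property of $w_k$. In standard Q-learning this is automatic from the linearity of conditional expectation, but $\rho_{cpt}$ is a nonlinear functional of the conditional distribution of $s_{t+1}$, so a single sampled transition does \emph{not} yield an unbiased estimate of $(\mathcal{T}_\pi Q^k_{cpt})$. There are two natural ways to handle this: (i) assume $\rho_{cpt}$ is evaluated exactly against the conditional distribution of $s_{t+1}$ at each update, in which case the remaining randomness lies only in the visiting schedule and the result reduces to the asynchronous contraction-iteration theorem of Bertsekas--Tsitsiklis; or (ii) introduce an empirical-CDF estimator of $\rho_{cpt}$ built up over repeated visits to $(s,a)$, decompose $w_k$ into a martingale-difference part plus a bias term, and argue via a two-timescale separation that the bias vanishes fast enough under the Robbins--Monro conditions. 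Once these pieces are assembled, invoking the stochastic-approximation theorem concludes that $Q^k_{cpt} \to Q^*_{cpt}$ with probability one.
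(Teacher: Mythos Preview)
Your plan is essentially the paper's: obtain the unique fixed point $Q^*_{cpt}$ from the contraction in Proposition~\ref{PropContract}, recast~(\ref{Q-Iteration}) as a stochastic-approximation iteration, and invoke the Jaakkola--Jordan--Singh convergence theorem. The paper's argument is terser than yours: it sets $\Delta_k := Q^k_{cpt} - Q^*_{cpt}$, rewrites~(\ref{Q-Iteration}) directly as $\Delta_{k+1} = (1-\alpha_k)\Delta_k + \alpha_k[\mathcal{T}_\pi Q^k_{cpt} - \mathcal{T}_\pi Q^*_{cpt}]$, and cites \cite{jaakkola1994convergence} for $\Delta_k \to 0$. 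In other words, the paper tacitly identifies the bracketed sample in~(\ref{Q-Iteration}) with the exact operator value---your option~(i)---and never introduces a noise term $w_k$ or discusses the bias that the nonlinearity of $\rho_{cpt}$ creates for a single-transition update. Your flagging of this issue is well taken and goes beyond what the paper's proof actually addresses; Algorithm~\ref{algo:CPT-Estimation} later builds an empirical-CDF estimate of $\rho_{cpt}$ from multiple samples, which is closer in spirit to your option~(ii), but Proposition~\ref{PropConvergence} itself does not account for the resulting estimation error.
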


\begin{proof}
From Proposition \ref{PropContract}, we know that repeated application of $(\mathcal{T}_\pi Q_{cpt})$ results in convergence to a fixed point $Q^*_{cpt}$ which satisfies $(\mathcal{T}_\pi Q^*_{cpt}) =  Q^*_{cpt}$ for all $(s,a) \in S \times A$. 
Defining $\Delta_k := Q^k_{cpt} - Q^*_{cpt}$, Eqn (\ref{Q-Iteration}) can be written as: 
\begin{align*}
\Delta_{k+1}&=(1-\alpha_k(s_t,a_t))\Delta_k + \alpha_k(s_t,a_t)[\mathcal{T}_\pi Q^k_{cpt} - \mathcal{T}_\pi Q^*_{cpt}].
\end{align*}
The above equation is in the form of a stochastic iterative process on $\Delta_k$. 
Since $(\mathcal{T}_\pi Q_{cpt})$ is a contraction and the costs are bounded, the sequence $\{\Delta_k\}$ will converge to zero with probability one \cite{jaakkola1994convergence}. 
\end{proof}

\section{Algorithms for CPT-based RL}\label{CPTRLAlgos}
\begin{algorithm}[!h]
	\small
	\caption{CPT-Estimation}
	\label{algo:CPT-Estimation}
	\begin{algorithmic}[1]
		\REQUIRE{State $s$, action $a$, current policy $\pi$,  max. samples $N_{max}$}
		\STATE{\textbf{Initialize} $n = 1$; $X_{0}:=\infty$; $s_* \leftarrow s$}
		\REPEAT
		\STATE{Take action $a$, observe $c(s,a)$ and next state $s'$}
%		\STATE{$X_n:=c(s, a) + \gamma V_{cpt}(s')$}
		\STATE{$X_n:=c(s, a) + \gamma \sum_b \pi(b|s') Q_{cpt}(s', b)$}
		\IF{$X_n < X_{0}$}
		\STATE{$s_* \leftarrow s'$}
		\STATE{$X_0 \leftarrow X_n$}
		\ENDIF
		\STATE{$n \leftarrow n+1$}
		\UNTIL{$n>N_{max}$}
		\STATE{Arrange samples $\{X_i\}$ in ascending order: $X_{[1]} \leq X_{[2]} \leq \dots$}
		\STATE{Let:
		\begin{align*}
		\rho_{cpt}^+:&=\sum_{i=1}^{N_{max}} u^+(X_{[i]})(w^+(\frac{N_{max}+i-1}{N_{max}}) - w^+(\frac{N_{max}-i}{N_{max}}))\\
		\rho_{cpt}^-:&=\sum_{i=1}^{N_{max}} u^-(X_{[i]})(w^-(\frac{i}{N_{max}}) - w^-(\frac{i-1}{N_{max}}))
		\end{align*}		
		}
		\STATE{$\rho_{cpt}(c(s,a)+\gamma \sum_b \pi(b|\cdot) Q_{cpt}(\cdot, b)):= \rho_{cpt}^+ - \rho_{cpt}^-$}
		\RETURN{$\rho_{cpt}(\cdot), s_*$}
	\end{algorithmic}
\end{algorithm}

\begin{algorithm}[!h]
	\small
	\caption{CPT-SARSA}
	\label{algo:CPT-SARSA}
	\begin{algorithmic}[1]
		\REQUIRE{Learning rate $\alpha$; max. episodes $T_{max}$; discount $\gamma$}
		\STATE{\textbf{Initialize} $Q_{cpt}(s,a)$, $T = 1$}
		\REPEAT
		\STATE{Initialize $s \in S$}
		\REPEAT
		\STATE{Choose $a$ according to policy $\pi$}
%		\STATE{Take action $a$, observe $c(s,a)$ and next state $s'$}
		\STATE{Obtain $\rho_{cpt}(\cdot), s_*$ from Algorithm \ref{algo:CPT-Estimation}}
		\STATE{$\delta:=\rho_{cpt}(\cdot)-Q_{cpt}(s,a)$}
		\STATE{$Q_{cpt}(s,a) \leftarrow Q_{cpt}(s,a) + \alpha \delta$}
		\STATE{$s \leftarrow s_*$}
		\UNTIL{$s$ is a terminal state}
		\STATE{$T \leftarrow T+1$}
		\UNTIL{$T>T_{max}$}
	\end{algorithmic}
\end{algorithm}

In this section, we will present two algorithms using temporal difference (TD) techniques for CPT-based reinforcement learning. 
%These can be considered the analogues of the expected SARSA, and actor-critic algorithms in traditional RL where the objective is to minimize (maximize) an accumulated discounted cost (reward) \cite{sutton2018reinforcement}. 
TD techniques seek to learn value functions using episodes of experience.
An experience episode comprises a sequence of states, actions, and costs when following a policy $\pi$. 
The predicted values at any time-step is updated in a way to bring it closer to the prediction of the same quantity at the next time-step. 
%We highlight the salient features of each method, and will compare them in our evaluations in the next section. 

Formally, given that the RL agent in a state $s$ took action $a$ and transitioned to state $s'$, the \emph{TD-update} of $V_{cpt}(s)$ is given by $V_{cpt}(s) \leftarrow V_{cpt}(s)+\alpha( \rho_{cpt}(c(s,a)+\gamma V_{cpt}(s'))-V_{cpt}(s))$. 
This can be rewritten as $V_{cpt}(s) \leftarrow V_{cpt}(s)+\alpha \delta$, where $\delta$ is called the \emph{TD-error}. 
A positive value of $\delta$ indicates that the action taken in state $s$ resulted in an improved CPT-V. 
The TD-update in this case determines estimates of CPT-V. 
A similar update rule can be defined for CPT-Q. 
%In each case, we will assume that the weight and utility functions are known, and $\rho_{cpt}$ is calculated according to Equation (\ref{CPTValue}). 
From Equations (\ref{CPTValue}) and (\ref{CPTValueDefn}), we observe that  $\rho_{cpt}$ is defined in terms of a weighting function applied to a cumulative probability distribution. 
In order to use TD-methods in a prospect-theoretic framework, we first use a technique proposed in \cite{jie2018stochastic} to estimate the CPT-value $\rho_{cpt}$. 

\subsection{Calculating $\rho_{cpt}$ from samples}

Algorithm \ref{algo:CPT-Estimation} is a procedure to obtain multiple samples of the random variable $c(s,a)+\gamma V_{cpt}(s')$. 
These samples are then used to estimate $\rho_{cpt}(c(s,a)+\gamma V_{cpt}(s'))$. 
This way to estimate the CPT-value of a random variable was proposed in \cite{jie2018stochastic}, and was shown to be asymptotically consistent. 

\subsection{CPT-SARSA}

Algorithm \ref{algo:CPT-SARSA} updates an estimate of CPT-Q for each state-action pair $(s,a)$ through a temporal difference. 
`SARSA' is named for the components used in the update: the state, action, and reward at time $t$ (S-A-R-), and the state and action at time $t+1$ (-S-A) \cite{van2009theoretical}. 
For an action $a$ taken in state $s$ resulting in a transition to a state $s'$, CPT-SARSA exploits the randomized nature of the policy to compute a weighted sum of possible actions in state $s'$ according to the current policy $\pi$ (Line 4). 
%This will correspond to passing $\pi$ as an input and modifying Lines 4 and 13 in Algorithm \ref{algo:CPT-Estimation} by replacing $V(\cdot)$ with $\sum_b \pi(b|\cdot) Q_{cpt}(\cdot, b)$. 

For a state-action pair $(s,a)$, the learning rate $\alpha (s,a)$ is set to $1/N(s,a)$, where $N(s,a)$ is a count of the number of times $(s,a)$ is visited. 
$N(s,a)$ is incremented by $1$ each time $(s,a)$ is visited, and therefore, $\alpha (s,a)$ will satisfy the conditions of Proposition \ref{PropConvergence}. 

\subsection{CPT-Actor-Critic}

Actor-critic methods separately update state-action values and a parameter associated to the policies. 
Algorithm \ref{algo:CPT-Actor-Critic} indicates how these updates are carried out. 
The \emph{actor} is a policy $\pi$ while the \emph{critic} is an estimate of the value function $Q_{cpt}^\pi$. 
%This setting can be interpreted in the following way: the critic observes the actor, and passes the TD-error to the actor. If the error received by the actor is positive, then it means that the action taken was a `good' one, and as a consequence, the tendency to choose this action in the future can be increased. 
%The reason for using CPT-V over CPT-Q in this setting is that CPT-V can be updated whenever a state is visited. 
This setting can be interpreted in the following way: if the updated critic for action $a$ at state $s$, obtained by computing the TD-error, is a quantity that is higher than the value obtained by taking a reference action $a_{ref}$ at state $s$, then $a$ is a `good' action. 
As a consequence, the tendency to choose this action in the future can be increased. 
Under an assumption that learning rates for the actor and critic each satisfy the conditions of Proposition \ref{PropConvergence}, and that the actor is updated at a much slower rate than the critic, this method is known to converge \cite{borkar2000ode}. 

\begin{algorithm}[!h]
	\small
	\caption{CPT-Actor-Critic}
	\label{algo:CPT-Actor-Critic}
	\begin{algorithmic}[1]
		\REQUIRE{Learning rates $\alpha_1, \alpha_2$; max. episodes $T_{max}$; discount $\gamma$}
		\STATE{\textbf{Initialize} $Q_{cpt}(s,a)$, $p(s,a)$, $T = 1$}
		\REPEAT
		\STATE{Initialize $s \in S$}
		\REPEAT
		\STATE{Choose $a$ according to policy $\pi$}
%		\STATE{Take action $a$, observe $c(s,a)$ and next state $s'$}
		\STATE{Obtain $\rho_{cpt}(\cdot), s_*$ from Algorithm \ref{algo:CPT-Estimation}}
%		\STATE{$\delta:=\rho_{cpt}(\cdot)-V_{cpt}(s)$}
%		\STATE{$V_{cpt}(s) \leftarrow V_{cpt}(s) + \alpha_1 \delta$}
%		\STATE{$p(s,a) \leftarrow p(s,a) + \alpha_2 \delta$}
		\STATE{$\delta:=\rho_{cpt}(\cdot)-Q_{cpt}(s,a)$}
		\STATE{$Q_{cpt}(s,a) \leftarrow Q_{cpt}(s,a) + \alpha_1 \delta$}
		\STATE{$p(s,a) \leftarrow p(s,a) + \alpha_2 (Q_{cpt}(s,a) - Q_{cpt}(s,a_{ref}))$}
		\STATE{Generate new policy $\pi'$ using updated $p(s,a)$} 
		\STATE{$s \leftarrow s_*$, $\pi \leftarrow \pi'$}
		\UNTIL{$s$ is a terminal state}
		\STATE{$T \leftarrow T+1$}
		\UNTIL{$T>T_{max}$}
	\end{algorithmic}
\end{algorithm}

One example in which the policy parameters $p(s,a)$ can be used to determine a policy is the Gibbs softmax method \cite{sutton2018reinforcement}, defined as $\pi(a|s):= \frac{exp(p(s,a))}{\sum_{b \in A} exp(p(s,b))}$. 

\begin{figure}[t!]
	\centering
	\begin{subfigure}{.23\textwidth}
		\includegraphics[width=\textwidth]{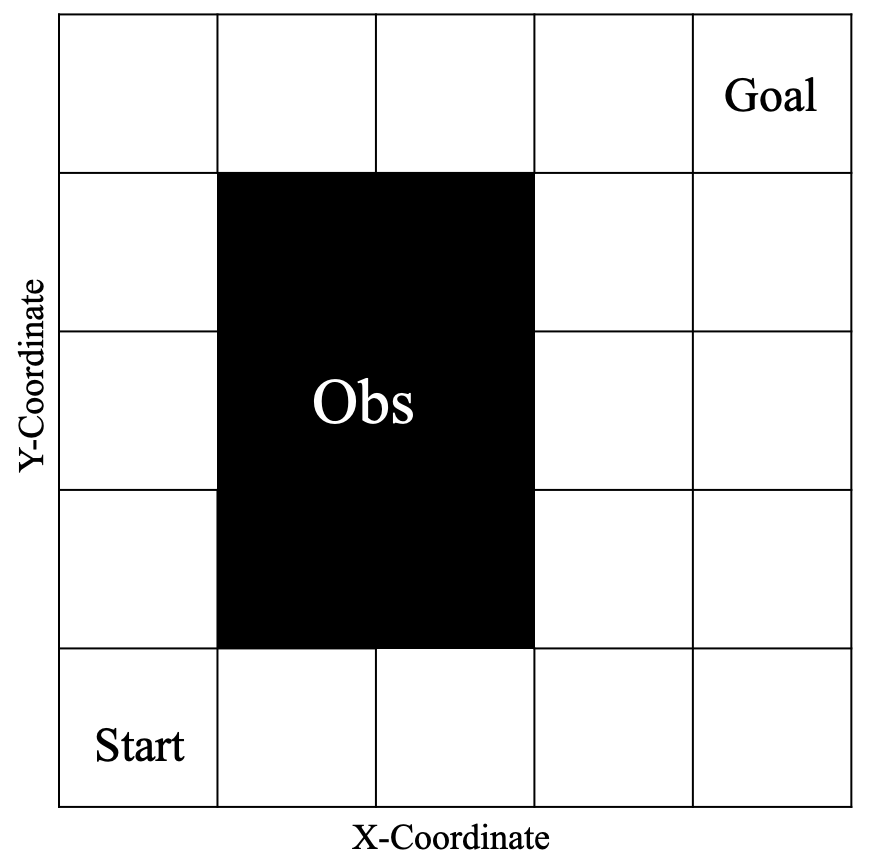}
		\subcaption {}
		\label{fig:grid world 1}
	\end{subfigure}\hfill
	\begin{subfigure}{.24\textwidth}
		\includegraphics[width=\textwidth]{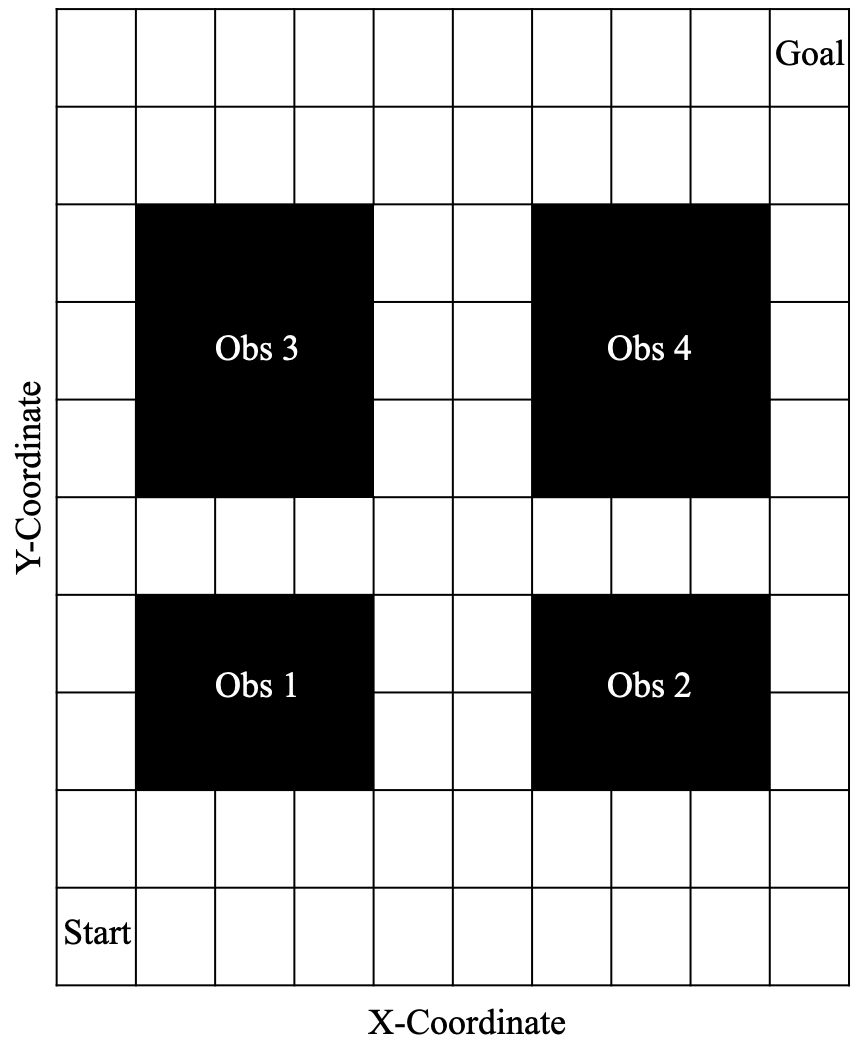}
		\subcaption {}
		\label{fig:grid world 2}
	\end{subfigure}\hfill
	\caption{The environments on which the CPT-based RL algorithms are evaluated. The agent has to learn to reach the `Goal' from the `Start' while avoiding the obstacles. Fig. \ref{fig:grid world 1} shows an environment with a single obstacle. Fig. \ref{fig:grid world 2} is a larger environment with four obstacles. The agent incurs a different cost for encountering each of these obstacles.}
\end{figure}
\begin{figure*}[t!]
	\centering
	\begin{subfigure}{.43\textwidth}
		\includegraphics[width=\textwidth]{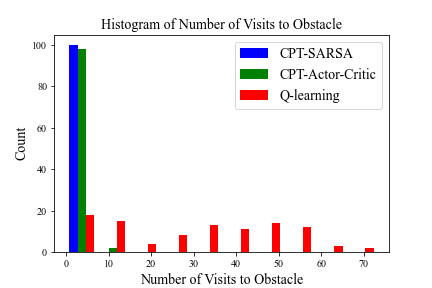}
		\subcaption {}
		\label{fig:collision 1}
	\end{subfigure}\hfill
	\begin{subfigure}{.43\textwidth}
		\includegraphics[width=\textwidth]{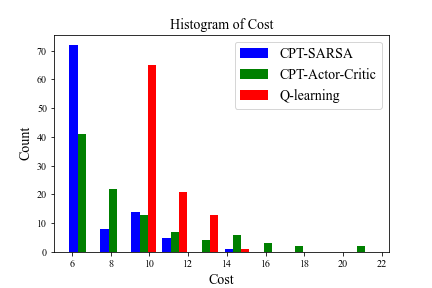}
		\subcaption{}
		\label{fig:cost 1}
	\end{subfigure}\hfill
	\caption{Comparison of agent adopting policies learned using CPT-SARSA, CPT-Actor-Critic, and Q-learning for environment in Fig. \ref{fig:grid world 1}. 
	For a policy learned using each method, we generate $100$ sample paths. 
	Fig. \ref{fig:collision 1} compares the number of times that the obstacle region is reached on each sample path.  
	In almost all $100$ cases, the agent using CPT-SARSA (blue) or CPT-Actor-Critic (green) is able to avoid the obstacle, while this is not the case when it uses Q-learning (red). 
	Fig. \ref{fig:cost 1} compares the total cost incurred by the agent in reaching the target state from the start state. 
	We observe that in some cases, the agent using a CPT-based method incurs a higher cost than when using Q-learning. 
	This can be interpreted in terms of the agent taking a possibly longer route to reach the `Goal' state in order to avoid the obstacle.
	}\label{FigHist1}
\end{figure*}
\begin{figure*}[t!]
	\centering
%	\begin{subfigure}{.43\textwidth}
%		\includegraphics[width=\textwidth]{hist_2_1.png}
%		\subcaption {}
%		\label{fig:collision 2_1}
%	\end{subfigure}\hfill
%	\begin{subfigure}{.43\textwidth}
%		\includegraphics[width=\textwidth]{hist_2_2.png}
%		\subcaption{}
%		\label{fig:collision 2_2}
%	\end{subfigure}\hfill
	\begin{subfigure}{.43\textwidth}
		\includegraphics[width=\textwidth]{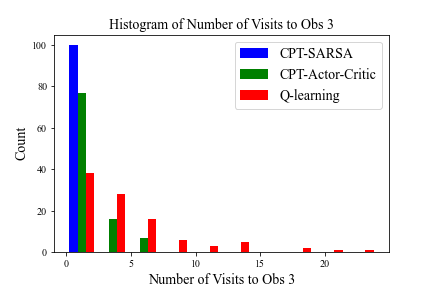}
		\subcaption{}
		\label{fig:collision 2_3}
	\end{subfigure}\hfill
	\begin{subfigure}{.43\textwidth}
		\includegraphics[width=\textwidth]{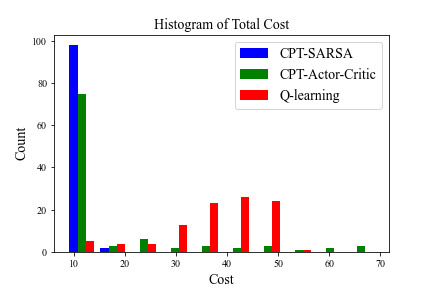}
		\subcaption {}
		\label{fig:cost 2}
	\end{subfigure}\hfill
	\caption{Comparison of agent adopting policies learned using CPT-SARSA, CPT-Actor-Critic, and Q-learning for environment in Fig. \ref{fig:grid world 2}. 
	For a policy learned using each method, we generate $100$ sample paths. 
	Fig. \ref{fig:collision 2_3} compares the number of times that the obstacle $Obs$ $3$ is reached on each sample path.  
	In almost all $100$ cases, the agent using CPT-SARSA (blue) or CPT-Actor-Critic (green) is able to avoid the obstacle, while this is not the case when it uses Q-learning (red). 
	Fig. \ref{fig:cost 2} compares the total cost incurred by the agent in reaching the target state from the start state.}\label{FigHist2}
\end{figure*}

\section{Experimental Evaluation}\label{Simulation}

This section presents an evaluation of the CPT-based RL algorithms developed in the previous section in two environments. 
The environments are represented as a grid and the agent has the ability to move in the four cardinal directions ($\{left,right,up,down\}$). 
There are obstacles that the agent will have to avoid in order to reach a target state. 
In each case, a model of the environment is not available to the agent, and the agent will have to learn behaviors through minimizing an accumulated CPT-based cost, where the cost signal is provided by the environment. 
We compare behaviors learned when the agent optimizes a CPT-based cost with a baseline when the agent optimizes an expected cost.

\subsection{Environments}

We evaluate our methods on two environments shown in Figures \ref{fig:grid world 1} and \ref{fig:grid world 2}. 
In each case, the agent has to learn to reach a target state while avoiding the obstacles. 
We assume that the agent starts from the state `Start' at the bottom left corner, and the target state `Goal' is at the top right corner. 
At each state, the agent can take one of four possible actions, $\{left,right,up,down\}$. 
A \emph{neighboring state} is defined as any state that shares a boundary with the current state of the agent, and we denote the number of neighboring states at the current state by $N_{ns}$. 
For each action that the agent takes at a state, the agent can transit with probability $0.9$ to the intended next state, and with probability $0.1/(N_{ns}-1)$ to some neighboring state. 
However, this transition probability is not known to the agent. 

We compare the behaviors learned by the agent using the CPT-SARSA and CPT-Actor-Critic algorithms with a baseline where the agent uses Q-learning \cite{sutton2018reinforcement} to minimize a total discounted cost. 
The discount factor $\gamma$ is set to $0.9$, and the utility and weighting functions for the CPT-based methods are chosen as: 
\begin{align*}
&u^+(x) = |x|^{0.88}; \quad u^-(x)=|x|^{0.88};\\
&\omega^+(\kappa) = \frac{\kappa^{0.61}}{(\kappa^{0.61}+(1-\kappa)^{0.61})^{\frac{1}{0.61}}}; \\ &\omega^-(\kappa) = \frac{\kappa^{0.69}}{(\kappa^{0.69}+(1-\kappa)^{0.69})^{\frac{1}{0.69}}}.
\end{align*}

In both environments, the cost incurred by the agent is $1$ at a state that is not an obstacle. 
For the environment in Figure \ref{fig:grid world 1}, the cost is $5$ at a state occupied by the obstacle. 
For the environment in Figure \ref{fig:grid world 2}, the cost incurred by the agent at an obstacle $Obs$ $z$ is given by $10*z$, where $z = \{1,2,3,4\}$. 

\subsection{Results}

Figure \ref{FigHist1} compares behaviors learned by the agent when adopting policies learned using the CPT-SARSA, CPT-Actor-Critic, and Q-learning methods for the environment in Fig. \ref{fig:grid world 1}. 
We generate $100$ sample paths for a policy learned using each method, and compare the number of times the agent visits the obstacle, and the total cost incurred by the agent to reach the target state. 
We observe in Fig. \ref{fig:collision 1} that the agent using either CPT-based method is able to avoid the obstacle in almost all cases. 
This might however come at a possibility of incurring a higher cost in some cases, as seen in Fig. \ref{fig:cost 1}.  
The behavior exhibited by the agent in this case is better aligned with the behavior of a human user who when placed in the same environment, might prefer to take a longer route from `Start' to `Goal' in order to avoid the obstacle. 

The behavior of the agent in the environment in Fig. \ref{fig:grid world 2} is shown in Fig. \ref{FigHist2}.  
Histograms for visits to $Obs$ $1$ and $Obs$ $2$ are similar to that shown in Fig. \ref{fig:collision 2_3}. 
The agent using a CPT-based algorithm is able to avoid obstacles and incur a lower cost in most cases than an agent using Q-learning in this environment as well. 
An agent using CPT-Actor-Critic incurs a higher cost and visits an obstacle more number of times than when using CPT-SARSA. 
This could be because CPT-Actor-Critic requires adjusting values of two learning rates. 
The choice of reference action $a_{ref}$ in Line 9 of Algorithm \ref{algo:CPT-Actor-Critic} may also play a role.
\begin{table}[h!]
	\centering
	\begin{tabular}{||c |c| c| c| c||} 
		\hline
		& Obs 1 & Obs 2 & Obs 3 & Obs 4  \\ [0.5ex] 
		\hline
		CPT-SARSA & 0.01 & 0.04 & 0.03  & 0 \\ 
		\hline
		CPT-Actor-Critic & 1.87 & 0.13 & 1.42 & 0  \\
		\hline
		Q-learning  & 29.82 & 2.81 & 4.55 & 0\\
		\hline
	\end{tabular}
	\caption{Number of visits to obstacle regions, averaged over 100 sample paths, for environment in Fig. \ref{fig:grid world 2}.}
	\label{table:compare}
\end{table}

For the environment in Fig. \ref{fig:grid world 2}, the number of visits to obstacles, averaged over $100$ sample paths for policies adopted following each method are presented in Table \ref{table:compare}. 
%These numbers when the agent uses CPT-SARSA, CPT-Actor-Critic, and Q-learning are respectively $0.21$, $0.77$, and $30.28$. 
%
%The number of visits to each of the obstacles for the environment in Fig. \ref{fig:grid world 2} is shown in Table \ref{table:compare}. 
All three methods allow the agent to learn policies to avoid the most costly obstacle $Obs$ $4$. However, the number of times obstacles with lower costs are encountered is much smaller for CPT-SARSA and CPT-Actor-Critic than Q-learning. 
The largest difference in the number of visits is observed for the obstacle with lowest cost, $Obs$ $1$. 

\section{Conclusion}\label{Conclusion}

This paper presented a way to enable a reinforcement learning (RL) agent learn behaviors to closely mimic that of a human user. 
The ability of autonomous agents to align their behaviors with human users is becoming increasingly important in situations where actions of human users and an autonomous system influence each other in a shared environment. 
We used cumulative prospect theory (CPT) to model the tendency of humans to view gains and losses differently. 
When the agent had to learn behaviors in an unknown environment, we used a CPT-based cost to specify the objective that the agent had to minimize. 
%We defined the CPT-value of an action taken in a state, and established the convergence of an iterative dynamic programming-based method to estimate this quantity. 
We developed two RL algorithms to enable the agent to learn policies to optimize the CPT-value of a state-action pair, and evaluated these algorithms in two environments. 
We observed that the behaviors of the agent when following policies learned using the CPT-based methods were better aligned with those of a human user who might be placed in the same environment, and is significantly improved over a Q-learning baseline. 

Our analysis and experiments in this paper considered discrete state and action spaces. 
We will seek to extend our work to settings with continuous states and actions. 
A second research direction is to study the case when value functions and policies will be parameterized by neural networks. 

\bibliographystyle{IEEEtran}
\bibliography{CDC21References}

% Generated by IEEEtran.bst, version: 1.14 (2015/08/26)
\begin{thebibliography}{10}
\providecommand{\url}[1]{#1}
\csname url@samestyle\endcsname
\providecommand{\newblock}{\relax}
\providecommand{\bibinfo}[2]{#2}
\providecommand{\BIBentrySTDinterwordspacing}{\spaceskip=0pt\relax}
\providecommand{\BIBentryALTinterwordstretchfactor}{4}
\providecommand{\BIBentryALTinterwordspacing}{\spaceskip=\fontdimen2\font plus
\BIBentryALTinterwordstretchfactor\fontdimen3\font minus
  \fontdimen4\font\relax}
\providecommand{\BIBforeignlanguage}[2]{{%
\expandafter\ifx\csname l@#1\endcsname\relax
\typeout{** WARNING: IEEEtran.bst: No hyphenation pattern has been}%
\typeout{** loaded for the language `#1'. Using the pattern for}%
\typeout{** the default language instead.}%
\else
\language=\csname l@#1\endcsname
\fi
#2}}
\providecommand{\BIBdecl}{\relax}
\BIBdecl

\bibitem{sutton2018reinforcement}
R.~S. Sutton and A.~G. Barto, \emph{Reinforcement Learning: {A}n
  Introduction}.\hskip 1em plus 0.5em minus 0.4em\relax MIT Press, 2018.

\bibitem{bertsekas2017dynamic}
D.~P. Bertsekas, \emph{Dynamic {P}rogramming and {O}ptimal {C}ontrol, 4th
  Ed.}\hskip 1em plus 0.5em minus 0.4em\relax Athena Scientific, 2017, vol.~1.

\bibitem{puterman2014markov}
M.~L. Puterman, \emph{Markov decision processes: {D}iscrete stochastic dynamic
  programming}.\hskip 1em plus 0.5em minus 0.4em\relax John Wiley \& Sons,
  2014.

\bibitem{hafner2011reinforcement}
R.~Hafner and M.~Riedmiller, ``Reinforcement learning in feedback control,''
  \emph{Machine Learning}, vol.~84, pp. 137--169, 2011.

\bibitem{mnih2015human}
V.~Mnih \emph{et~al.}, ``Human-level control through deep reinforcement
  learning,'' \emph{Nature}, vol. 518, no. 7540, 2015.

\bibitem{silver2016mastering}
D.~Silver \emph{et~al.}, ``Mastering the game of {G}o with deep neural networks
  and tree search,'' \emph{Nature}, vol. 529, no. 7587, 2016.

\bibitem{zhang2019deep}
C.~Zhang, P.~Patras, and H.~Haddadi, ``Deep learning in mobile and wireless
  networking: {A} survey,'' \emph{IEEE Communications Surveys \& Tutorials},
  vol.~21, no.~3, pp. 2224--2287, 2019.

\bibitem{sadigh2016planning}
D.~Sadigh, S.~Sastry, S.~A. Seshia, and A.~D. Dragan, ``Planning for autonomous
  cars that leverage effects on human actions.'' in \emph{Robotics: Science and
  Systems}, 2016.

\bibitem{yan2018data}
Z.~Yan and Y.~Xu, ``Data-driven load frequency control for stochastic power
  systems: {A} deep reinforcement learning method with continuous action
  search,'' \emph{IEEE Transactions on Power Systems, 34(2)}, 2018.

\bibitem{you2019advanced}
C.~You, J.~Lu, D.~Filev, and P.~Tsiotras, ``Advanced planning for autonomous
  vehicles using reinforcement learning and deep inverse {RL},'' \emph{Robotics
  and Autonomous Systems}, vol. 114, pp. 1--18, 2019.

\bibitem{gollier2001economics}
C.~Gollier, \emph{The economics of risk and time}.\hskip 1em plus 0.5em minus
  0.4em\relax MIT Press, 2004.

\bibitem{gilboa2009theory}
I.~Gilboa, \emph{Theory of decision under uncertainty}.\hskip 1em plus 0.5em
  minus 0.4em\relax Cambridge University Press, 2009.

\bibitem{shen2013risk}
Y.~Shen, W.~Stannat, and K.~Obermayer, ``Risk-sensitive {M}arkov control
  processes,'' \emph{SIAM Journal on Control and Optimization}, vol.~51, no.~5,
  pp. 3652--3672, 2013.

\bibitem{shen2014risk}
Y.~Shen, M.~J. Tobia, T.~Sommer, and K.~Obermayer, ``Risk-sensitive
  reinforcement learning,'' \emph{Neural computation}, vol.~26, no.~7, pp.
  1298--1328, 2014.

\bibitem{kahneman1979prospect}
D.~Kahneman and A.~Tversky, ``Prospect theory: {A}n analysis of decision under
  risk,'' \emph{Econometrica}, vol.~47, no.~2, pp. 263--292, 1979.

\bibitem{tversky1992advances}
A.~Tversky and D.~Kahneman, ``Advances in prospect theory: {C}umulative
  representation of uncertainty,'' \emph{Journal of Risk and uncertainty},
  vol.~5, no.~4, pp. 297--323, 1992.

\bibitem{markowitz1952portfolio}
H.~Markowitz, ``Portfolio selection,'' \emph{The Journal of Finance}, vol.~7,
  no.~1, pp. 77--91, 1952.

\bibitem{tamar2012policy}
A.~Tamar, D.~Di~Castro, and S.~Mannor, ``Policy gradients with variance related
  risk criteria,'' in \emph{International Coference on International Conference
  on Machine Learning}, 2012, pp. 1651--1658.

\bibitem{mannor2013algorithmic}
S.~Mannor and J.~N. Tsitsiklis, ``Algorithmic aspects of mean--variance
  optimization in {M}arkov decision processes,'' \emph{European Journal of
  Operational Research}, vol. 231, no.~3, pp. 645--653, 2013.

\bibitem{howard1972risk}
R.~A. Howard and J.~E. Matheson, ``Risk-sensitive {M}arkov decision
  processes,'' \emph{Management Science}, vol.~18, no.~7, pp. 356--369, 1972.

\bibitem{whittle1990risk}
P.~Whittle, \emph{Risk-sensitive optimal control}.\hskip 1em plus 0.5em minus
  0.4em\relax Wiley, 1990.

\bibitem{borkar2002q}
V.~S. Borkar, ``Q-learning for risk-sensitive control,'' \emph{Mathematics of
  Operations Research}, vol.~27, no.~2, pp. 294--311, 2002.

\bibitem{rockafellar2002conditional}
R.~T. Rockafellar and S.~Uryasev, ``Conditional value-at-risk for general loss
  distributions,'' \emph{Journal of banking \& finance}, vol.~26, no.~7, pp.
  1443--1471, 2002.

\bibitem{prashanth2018risk}
L.~A. Prashanth and M.~Fu, ``Risk-sensitive reinforcement learning: {A}
  constrained optimization viewpoint,'' \emph{arXiv:1810.09126}, 2018.

\bibitem{prashanth2016cumulative}
L.~A. Prashanth, C.~Jie, M.~Fu, S.~Marcus, and C.~Szepesv{\'a}ri, ``Cumulative
  prospect theory meets reinforcement learning: {P}rediction and control,'' in
  \emph{International Conference on Machine Learning}, 2016.

\bibitem{jie2018stochastic}
C.~Jie, L.~A. Prashanth, M.~Fu, S.~Marcus, and C.~Szepesv{\'a}ri, ``Stochastic
  optimization in a cumulative prospect theory framework,'' \emph{IEEE
  Transactions on Automatic Control}, vol.~63, no.~9, pp. 2867--2882, 2018.

\bibitem{lin2013dynamic}
K.~Lin and S.~I. Marcus, ``Dynamic programming with non-convex risk-sensitive
  measures,'' in \emph{American Control Conference}.\hskip 1em plus 0.5em minus
  0.4em\relax IEEE, 2013, pp. 6778--6783.

\bibitem{lin2018probabilistically}
K.~Lin, C.~Jie, and S.~I. Marcus, ``Probabilistically distorted risk-sensitive
  infinite-horizon dynamic programming,'' \emph{Automatica}, vol.~97, pp. 1--6,
  2018.

\bibitem{cubuktepe2018verification}
M.~Cubuktepe and U.~Topcu, ``Verification of {M}arkov decision processes with
  risk-sensitive measures,'' in \emph{Annual American Control Conference
  (ACC)}.\hskip 1em plus 0.5em minus 0.4em\relax IEEE, 2018, pp. 2371--2377.

\bibitem{lipp2016variations}
T.~Lipp and S.~Boyd, ``Variations and extension of the convex--concave
  procedure,'' \emph{Optimization and Engineering}, vol.~17, no.~2, pp.
  263--287, 2016.

\bibitem{majumdar2020should}
A.~Majumdar and M.~Pavone, ``How should a robot assess risk? towards an
  axiomatic theory of risk in robotics,'' in \emph{Robotics Research}.\hskip
  1em plus 0.5em minus 0.4em\relax Springer, 2020, pp. 75--84.

\bibitem{barberis2013thirty}
N.~C. Barberis, ``Thirty years of prospect theory in economics: {A} review and
  assessment,'' \emph{Journal of Economic Perspectives}, vol.~27, no.~1, pp.
  173--96, 2013.

\bibitem{prelec1998probability}
D.~Prelec, ``The probability weighting function,'' \emph{Econometrica}, pp.
  497--527, 1998.

\bibitem{jaakkola1994convergence}
T.~Jaakkola, M.~I. Jordan, and S.~P. Singh, ``On the convergence of stochastic
  iterative dynamic programming algorithms,'' \emph{Neural Computation},
  vol.~6, no.~6, pp. 1185--1201, 1994.

\bibitem{van2009theoretical}
H.~Van~Seijen, H.~Van~Hasselt, S.~Whiteson, and M.~Wiering, ``A theoretical and
  empirical analysis of expected {SARSA},'' in \emph{IEEE Symposium on Adaptive
  Dynamic Programming and Reinforcement Learning}, 2009, pp. 177--184.

\bibitem{borkar2000ode}
V.~S. Borkar and S.~P. Meyn, ``The {ODE} method for convergence of stochastic
  approximation and reinforcement learning,'' \emph{SIAM Journal on Control and
  Optimization}, vol.~38, no.~2, pp. 447--469, 2000.

\end{thebibliography}
\end{document}